\newcommand{\footremember}[2]{%
	\footnote{#2}
	\newcounter{#1}
	\setcounter{#1}{\value{footnote}}%
}
\newcommand{\footrecall}[1]{%
	\footnotemark[\value{#1}]%
}
\title{Pitfalls of Gaussians as a noise distribution in NCE}
\author{Holden Lee\footremember{alphabetical}{Alphabetical ordering.}\footnote{\texttt{hlee283@jhu.edu}, Johns Hopkins University.}  \and Chirag Pabbaraju\footrecall{alphabetical} \footnote{\texttt{cpabbara@cs.stanford.edu}, Stanford University.} \and Anish Sevekari\footrecall{alphabetical} \footnote{\texttt{asevekar@andrew.cmu.edu}, Carnegie Mellon University.} \and Andrej Risteski\footnote{\texttt{aristesk@andrew.cmu.edu}, Carnegie Mellon University. Supported in part by NSF award IIS-2211907.}}
\begin{document}
	\maketitle
\begin{abstract}
Noise Contrastive Estimation (NCE) is a popular approach for learning probability density functions parameterized up to a constant of proportionality. The main idea is to use self-supervised learning (SSL): that is, construct a classification
problem for distinguishing training data from samples from an easy-to-sample noise distribution $q$, in a manner that avoids having to calculate a partition function.
It is well-known that the choice of $q$ can severely impact the computational and statistical efficiency of NCE. In practice, a common choice for $q$ is a Gaussian which matches the mean and covariance of the data.

In this paper, we show that such a choice can result in an exponentially bad (in the ambient dimension) conditioning of the Hessian of the loss, even for very simple data distributions. As a consequence, both the statistical and algorithmic complexity for such a choice of $q$ will be problematic in practice, suggesting that more complex noise distributions are essential to the success of NCE.
\end{abstract}

\section{Introduction} 
\label{s:introduction}

Noise contrastive estimation (NCE), introduced in \citep{gutmann2010noise, gutmann2012noise}, is one of several popular approaches for learning probability density functions parameterized up to a constant of proportionality, i.e. $p(x) \propto \exp(E_{\theta}(x))$, for some parametric family $\{E_{\theta}\}_{\theta}$. A recent incarnation of this paradigm is, for example, energy-based models (EBMs), which have achieved near-state-of-the-art results on many image generation tasks \citep{du2019implicit, song2019generative}. The main idea in NCE is to set up a self-supervised learning (SSL) task, in which we train a classifier to distinguish between samples from the data distribution $P_*$ and a known, easy-to-sample distribution $Q$, often called the ``noise'' or ``contrast'' distribution. It can be shown that for a large choice of losses for the classification problem, the optimal classifier model is a (simple) function of the density ratio $p_* / q$, so an estimate for $p_*$ can be extracted from a good classifier. 
Moreover, this strategy can be implemented \emph{while avoiding} calculation of the partition function, which is necessary when using maximum likelihood to learn $p^*$. 

The noise distribution $q$ is the most significant ``hyperparameter'' in NCE training, with both strong empirical \citep{rhodes2020telescoping} and theoretical \citep{liu2021analyzing} evidence that a poor choice of $q$ can result in poor algorithmic behavior. \citep{chehab2022optimal} show that even the optimal $q$ for finite number of samples can have an unexpected form (e.g., it is not equal to the true data distribution \textcolor{blue}{$p_*$}). 
Since $q$ needs to be a distribution that one can efficiently draw samples from, as well as write an expression for the probability density function, the choices are somewhat limited.

A particularly common way to pick $q$ is as a Gaussian that matches the mean and covariance of the input data \citep{gutmann2012noise, rhodes2020telescoping}. Our main contribution in this paper is to formally show that such a choice can result in an objective that is statistically poorly behaved, even for relatively simple data distributions. We show that even if $p^*$ is a \emph{product distribution} and a member of a very simple \emph{exponential family}, the Hessian of the NCE loss, when using a Gaussian noise distribution $q$ with matching mean and covariance has exponentially small (in the ambient dimension) spectral norm. As a consequence, the optimization landscape around the optimum will be exponentially flat, making gradient-based optimization challenging.  As the main result of the paper, we show the asymptotic sample efficiency of the NCE objective will be \emph{exponentially bad} in the ambient dimension. 

\newcommand{\lnce}{L}
\newcommand{\thetan}{\hat{\theta}_n}

\section{Overview of Results}
\label{s:overview}

Let $P_*$ be a distribution in a parametric family $\{P_\theta\}_{\theta \in \Theta}$. We wish to estimate $P_*$ via $P_\theta$ for some $\theta_* \in \Theta$ by solving a noise contrastive estimation task. To set up the task, we also need to choose a noise distribution $Q$, with the constraint that we can draw samples from it efficiently, and we can evaluate the probability density function efficiently.
We will use $p_{\theta}, p_*, q$ to denote the probability density functions (pdfs) of $P_{\theta}$, $P_*$, and $Q$.
For a data distribution $P_*$ and noise distribution $Q$, the NCE loss of a distribution $P_\theta$ is defined as follows:
\begin{definition}[NCE Loss]
\label{def:nce}
The NCE loss of $P_\theta$ w.r.t. data distribution $P_*$ and noise $Q$ is
\begin{equation}\label{eq:nce_loss}
\begin{split}
    L(P_\theta) = -\frac{1}{2}\E_{P_*} \log\frac{p_\theta}{p_\theta+q} - \frac{1}{2}\E_{Q} \log\frac{q}{p_\theta+q}.
\end{split}
\end{equation}
Moreover, the empirical version of the NCE loss when given i.i.d. samples $(x_1,\dots,x_n)\sim P_*^n$ and $(y_1,\dots,y_n) \sim Q^n$ is given by
\begin{equation}
    \label{eq:empirical_nce}
    \lnce^n(\theta) = \frac{1}{n} \sum_{i=1}^n -\frac{1}{2} \log \frac{p_\theta(x_i)}{p_\theta(x_i) + q(x_i)} + \frac{1}{n} \sum_{i=1}^n -\frac{1}{2} \log \frac{q(y_i)}{p_\theta(y_i) + q(y_i)}.
\end{equation}
By a slight abuse of notation, we will use $L(\theta), L(p_{\theta})$ and $L(P_{\theta})$ interchangeably.
\end{definition}
The NCE loss can be interpreted as the binary cross-entropy loss for the classification task of distinguishing the data samples from the noise samples. To avoid calculating the partition function, one considers it as an additional parameter, namely we consider an augmented vector of parameters $\tilde{\theta} = (\theta, c)$ and let
$p_{\tilde{\theta}}(x) = \exp(E_{\theta}(x) - c).$ 
The crucial property of the NCE loss is that it has a unique minimizer:
\begin{lemma}[\citealt{gutmann2012noise}]
\label{lem:nce_opt}
The NCE objective in Definition \ref{def:nce} is uniquely minimized at $\theta = \theta_*$ and $c = \log(\int_x \exp(E_{\theta^*}(x))dx)$  provided that the support of $Q$ contains that of $P_*$.
\end{lemma}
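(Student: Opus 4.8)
The plan is to treat the NCE loss as a functional of an arbitrary nonnegative measurable function $r$ in place of $p_\theta$ — forgetting momentarily that $r$ must arise from the parametric family — and to minimize it pointwise. Writing $h_r(x) = \frac{r(x)}{r(x)+q(x)}$ for the predicted posterior probability that $x$ came from $P_*$, the loss becomes
\[
L(r) = -\frac12 \int \Big( p_*(x)\log h_r(x) + q(x)\log\big(1 - h_r(x)\big)\Big)\,dx,
\]
i.e.\ the average, over the $\tfrac12/\tfrac12$ mixture of $P_*$ and $Q$, of the binary cross-entropy between the Bayes-optimal labels and the predictions $h_r$. The key observation is that for each fixed $x$ with $q(x) > 0$ the integrand is a strictly convex function of the scalar $h_r(x)\in[0,1]$ that tends to $+\infty$ at the endpoint(s) where the corresponding coefficient is positive, so it has a unique minimizer; a one-line computation (setting the derivative $-\,p_*(x)/h + q(x)/(1-h)$ to zero) gives $h^*(x) = \frac{p_*(x)}{p_*(x)+q(x)}$.

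First I would make the pointwise-to-global step rigorous: any $r$ with $h_r(x) = h^*(x)$ for a.e.\ $x$ attains the pointwise minimum of the integrand everywhere and hence is a global minimizer of $L$, while any global minimizer must agree with $h^*$ a.e.\ on $\{q > 0\}$, since otherwise the integral is strictly larger. The only analytic input here is strict convexity of $h \mapsto -a\log h - b\log(1-h)$ for $a,b>0$, whose second derivative is $a/h^2 + b/(1-h)^2 > 0$; the interchange of minimization and integration is justified simply because the feasible choice $r = p_*$ attains the pointwise lower bound at every point. Next, since $t \mapsto \frac{t}{t+q(x)}$ is strictly increasing on $[0,\infty)$ whenever $q(x) > 0$, the condition $h_r(x) = h^*(x)$ is equivalent to $r(x) = p_*(x)$. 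Invoking the hypothesis $\mathrm{supp}(P_*) \subseteq \mathrm{supp}(Q)$ — so that $p_* = 0$ a.e.\ on $\{q = 0\}$, and the values of $r$ on this set, which is null for both $P_*$ and $Q$, do not affect $L$ — we conclude that $L$ is minimized exactly when $r = p_*$ (equivalently, $P_*$- and $Q$-almost everywhere).

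Finally I would translate back to the parametrization. With $\tilde\theta = (\theta, c)$ and $p_{\tilde\theta}(x) = \exp(E_\theta(x) - c)$, the choice $\theta = \theta_*$, $c = c_* := \log\int \exp(E_{\theta_*}(x))\,dx$ gives $p_{\tilde\theta} = p_*$, so $(\theta_*, c_*)$ is a minimizer by the previous step; and any minimizer $(\theta,c)$ satisfies $\exp(E_\theta(x) - c) = p_*(x) = \exp(E_{\theta_*}(x) - c_*)$ for $Q$-a.e.\ (hence $P_*$-a.e.)\ $x$, which forces $(\theta, c) = (\theta_*, c_*)$ once the map $\tilde\theta \mapsto p_{\tilde\theta}$ is injective — identifiability of the family, which holds in the exponential-family setting considered in this paper; note it is precisely the normalization $\int p_{\tilde\theta}\,dx = 1$ that pins down $c$.

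I would flag two spots needing care rather than real difficulty. First, the treatment of $\{q = 0\}$: the support hypothesis is exactly what is used there, and dropping it genuinely destroys uniqueness of the minimizing function. Second, the final identifiability step is what upgrades ``unique minimizing density'' to ``unique minimizing $(\theta, c)$''; outside identifiable families one should read the conclusion as a statement about $p_{\tilde\theta}$. Neither is a genuine obstacle, but both should be stated explicitly so the lemma is not over-claimed.
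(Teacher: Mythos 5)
Your proof is correct and follows essentially the same route as the cited reference: the paper quotes this lemma from Gutmann and Hyv\"arinen (2012) without reproducing a proof, and the argument there is exactly the nonparametric relaxation you use---replace $p_\theta$ by a free nonnegative $r$, observe that the per-$x$ binary cross-entropy integrand is strictly convex in $h_r(x)$ with unique pointwise minimizer $h^*(x)=p_*(x)/(p_*(x)+q(x))$ wherever $q(x)>0$, conclude $r=p_*$ a.e.\ on $\mathrm{supp}(Q)$, and pull back through the parametrization. Your explicit handling of $\{q=0\}$ via the support hypothesis and your flag that uniqueness in $(\theta,c)$ (as opposed to uniqueness of the minimizing density) additionally requires identifiability of the family are both correct and match the assumptions made in the original reference.
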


We will be focusing on the Hessian of the loss $L$, as the crucial object governing both the algorithmic and statistical difficulty of the resulting objective. We will show the following two main results: 
\begin{theorem}[Exponentially flat Hessian] For $d > 0$ large enough, there exists a distribution $P_*=P_{\theta_*}$ over $\R^d$ such that \begin{itemize}[nosep]
	\label{thm:nce_hessian_bound}
	\item$\mathbb{E}_{P_*}[x] = 0$ and $\mathbb{E}_{P_*}[xx^\top] = I_d$. 
	\item $P_*$ is a product distribution, namely $p_* (x_1, x_2, \dots, x_d) = \prod_{i=1}^d p^*(x_i)$.
	\item 
		The NCE loss when using $q = \mathcal{N}(0, I_d)$ as the noise distribution has the property that
		$$ \|\nabla^2 L(\theta_*)\|_2 \leq \exp\left(-\Omega(d)\right).$$ 
\end{itemize} 
\end{theorem}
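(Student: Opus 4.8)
The plan is to reduce the statement to a clean closed form for $\nabla^2 L$ at the optimum and then exhibit a product exponential family for which that matrix is entrywise exponentially small.

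First I would compute the Hessian at the minimizer. Parametrize the augmented family by $\tilde\theta=(\theta,c)$ with $\log p_{\tilde\theta}(x)=E_\theta(x)-c$, and write $h_{\tilde\theta}=\tfrac{p_{\tilde\theta}}{p_{\tilde\theta}+q}=\sigma(\log p_{\tilde\theta}-\log q)$ with $\sigma$ the logistic sigmoid. Using $\sigma'=\sigma(1-\sigma)$ one gets $\nabla\log h_{\tilde\theta}=(1-h_{\tilde\theta})\nabla\log p_{\tilde\theta}$ and $\nabla\log(1-h_{\tilde\theta})=-h_{\tilde\theta}\nabla\log p_{\tilde\theta}$ (gradients in $\tilde\theta$). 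Differentiating $L$ twice and evaluating at the unique minimizer $\tilde\theta_*$, where $p_{\tilde\theta_*}=p_*$ by Lemma~\ref{lem:nce_opt}, the terms containing $\nabla^2\log p_{\tilde\theta}$ cancel, since at $\tilde\theta_*$ one has $\E_{P_*}[(1-h)M]=\E_{Q}[hM]=\int\tfrac{p_*q}{p_*+q}M$ for any $M$; what survives is the PSD matrix
\begin{equation*}
\nabla^2 L(\tilde\theta_*)=\tfrac12\int\frac{p_*(x)q(x)}{p_*(x)+q(x)}\,\psi(x)\psi(x)^\top\,dx,\qquad \psi(x):=\big(\nabla_\theta E_{\theta_*}(x),\,-1\big).
\end{equation*}
(For an exponential family $E_\theta$ is linear in $\theta$, so $\nabla^2_{\tilde\theta}\log p_{\tilde\theta}\equiv0$ and the cancellation is automatic.) Being PSD, $\|\nabla^2 L(\tilde\theta_*)\|_2=\sup_{\|v\|=1}v^\top\nabla^2 L(\tilde\theta_*)v$.

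For the construction I would take the one-dimensional quartic exponential family $p^{(1)}_\theta(x)\propto\exp(\theta_1 x+\theta_2 x^2+\theta_3 x^4)$ with $\theta_3<0$ (so all moments exist), and let $P_\theta$ be its $d$-fold product; here $T(x):=(x,x^2,x^4)$ and $\nabla_\theta E_{\theta_*}(x)=\sum_{i=1}^d T(x_i)$. Picking $\theta_*=(0,\theta_2^*,\theta_3^*)$ with $\theta_3^*<0$ makes the coordinate marginal $p^*$ symmetric (mean $0$), and adjusting $\theta_2^*$ by the intermediate value theorem fixes its variance to $1$; thus $P_*$ is a product distribution with $\E_{P_*}[x]=0$, $\E_{P_*}[xx^\top]=I_d$, and crucially $p^*\ne\mathcal{N}(0,1)$ because $\theta_3^*\ne0$. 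Now bound the quadratic form: since $p_*+q\ge 2\sqrt{p_*q}$ we have $\tfrac{p_*q}{p_*+q}\le\tfrac12\sqrt{p_*q}$, and for a product distribution $\sqrt{p_*(x)q(x)}=\prod_i\sqrt{p^*(x_i)q(x_i)}=\rho^d\,\mu^{\otimes d}(x)$, where $\rho:=\int\sqrt{p^*q}\in(0,1)$ (strict, by Cauchy--Schwarz and $p^*\ne q$) and $\mu\propto\sqrt{p^*q}$ is a probability density with finite moments. Hence for any unit $v=(v_1,v_2)$,
\begin{equation*}
v^\top\nabla^2 L(\tilde\theta_*)v=\tfrac12\int\frac{p_*q}{p_*+q}\Big(v_1^\top\textstyle\sum_i T(x_i)-v_2\Big)^2\le \tfrac{\rho^d}{4}\,\E_{\mu^{\otimes d}}\Big[\big(v_1^\top\textstyle\sum_i T(x_i)-v_2\big)^2\Big],
\end{equation*}
and by i.i.d.-ness under $\mu^{\otimes d}$ the right-hand expectation equals $d\,v_1^\top\Sigma_\mu v_1+(d\,v_1^\top m_\mu-v_2)^2=O(d^2)$ uniformly in $\|v\|\le1$, with constants depending only on $m_\mu=\E_\mu[T]$ and $\Sigma_\mu=\mathrm{Cov}_\mu(T)$. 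Therefore $\|\nabla^2 L(\tilde\theta_*)\|_2\le O(d^2)\rho^d=\exp(-\Omega(d))$.

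The conceptual crux is the first step — spotting that the second-derivative-of-log-density contributions cancel at the optimum, leaving the harmonic-mean-weighted outer-product formula. The step that needs the most care is the uniformity over $v$ in the last display: the statistic $\sum_i T(x_i)$ has $L^2(\mu^{\otimes d})$-norm $\Theta(\sqrt d)$ and squared $L^2$-norm $\Theta(d^2)$ once the nonzero mean is accounted for, so bounding $\tfrac{p_*q}{p_*+q}$ merely by $q$ (or by $p_*$) would give a useless $\Theta(d)$ bound; the point is that replacing it by the Bhattacharyya weight $\sqrt{p_*q}$ produces the factor $\rho^d$ that crushes this polynomial growth. Minor points to pin down: finiteness of the moments of $T$ under $\mu$ (immediate from $\theta_3^*<0$), and that Hessians taken in $\theta$ alone are principal submatrices of $\nabla^2 L(\tilde\theta_*)$, so the bound transfers a fortiori.
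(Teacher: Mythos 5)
Your proof is correct and takes a somewhat different (and arguably cleaner) route through the same basic structure. Both you and the paper start from the same formula $\nabla^2 L(\theta_*)=\tfrac12\int\tfrac{p_*q}{p_*+q}TT^\top$ for an exponential family at the optimum, both build $P_*$ as a $d$-fold product whose coordinate marginal has Bhattacharyya overlap $\rho<1$ with the standard Gaussian, and both extract an exponential decay from tensorization. Where you diverge is the midgame: the paper bounds $\tfrac{p_*q}{p_*+q}\le\min(p_*,q)$, invokes a matrix Cauchy--Schwarz to split off $\bigl(\int\min(p_*,q)\bigr)^{1/2}=(1-\mathrm{TV}(P_*,Q))^{1/2}$, and then passes through $\mathrm{TV}\ge\tfrac12 H^2$ and the tensorization of Hellinger to reach $\rho^{d/2}$. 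You instead use the AM--GM bound $\tfrac{p_*q}{p_*+q}\le\tfrac12\sqrt{p_*q}$ and observe directly that for a product family $\sqrt{p_*q}=\rho^d\,\mu^{\otimes d}$ with $\mu\propto\sqrt{p^*q}$ a probability density, so the $\rho^d$ factor falls out immediately and what remains is a second moment of a sum of i.i.d.\ statistics under $\mu^{\otimes d}$, which is $O(d^2)$ uniformly over unit $v$. Your route avoids the Cauchy--Schwarz/TV/Hellinger chain entirely and even yields the slightly sharper exponential factor $\rho^d$ rather than $\rho^{d/2}$; the paper's route, in exchange, produces a bound explicitly in terms of $1-\mathrm{TV}(P_*,Q)$ and a Fisher-information-type quantity, which reads more intrinsically. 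One other structural difference worth noting: you use a $4$-parameter family (shared coefficients $(\theta_1,\theta_2,\theta_3,c)$ applied to every coordinate), whereas the paper uses a $(d+1)$-parameter family with sufficient statistics $(x_1^4,\dots,x_d^4,1)$. Both suffice for the theorem as stated, but the paper's choice of a $(d+1)$-dimensional parameter space is the one that feeds into the sample-complexity lower bound in the next theorem, where the estimand lives in dimension growing with $d$; your low-dimensional family would prove the Hessian statement but would require rechecking whether the downstream MSE argument still goes through in that parametrization.
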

We remark the above example of a problematic distribution $P^*$ is extremely simple. Namely, $P^*$ is a product distribution, with 0 mean and identity covariance. It actually is also the case that $P^*$ is log-concave---which is typically thought of as an ``easy'' class of distributions to learn due to the fact that log-concave distributions are unimodal. 

The fact that the Hessian is exponentially flat near the optimum means that gradient-descent based optimization without additional tricks (e.g., gradient normalization, second order methods like Newton's method) will fail. (See, e.g., Theorem 4.1 and 4.2 in \citet{liu2021analyzing}.) For us, this will be merely an intermediate result. We will address a more fundamental issue of the sample complexity of NCE, which is independent of the optimization algorithm used. Namely, we will show that without a large number of samples, the best minimizer of the empirical NCE might not be close to the target distribution. Proving this will require the development of some technical machinery.

More precisely, we use the result above to show that the asymptotic statistical complexity, using the above choice of $P^*, Q$, is exponentially bad in the dimension. This substantially clarifies results in \citet{gutmann2012noise}, who provide an expression for the asymptotic statistical complexity in terms of $P^*, Q$ (Theorem 3, \citet{gutmann2012noise}), but from which it's very difficult to glean quantitatively how bad the dependence on dimension can be for a particular choice of $P^*, Q$. Unlike the landscape issues that \citep{liu2021analyzing} point out, the statistical issues are impossible to fix with a better optimization algorithm: they are fundamental limitations of the NCE loss.
\begin{theorem}[Asymptotic Statistical Complexity]
	\label{thm:nce_sample_complexity}
	Let $d > 0$ be sufficiently large and $Q = \mathcal{N}(0,I_d)$. Let $\thetan$ be the optimizer for the empirical NCE loss  $\lnce^n(\theta)$ with the data distribution $P_*$ given by \Cref{thm:nce_hessian_bound} above and noise distribution $Q$.
	Then, as $n \to \infty$, the mean-squared error satisfies $$\ex{\norm{\thetan - \theta_*}_2^2} = \frac{\exp(\Omega(d))}{n}.$$
\end{theorem}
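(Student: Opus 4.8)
The plan is to combine the classical asymptotic theory of $M$-estimators --- instantiated for NCE as Theorem~3 of \citet{gutmann2012noise} --- with the Hessian bound of \Cref{thm:nce_hessian_bound}. Write $\lnce^n(\theta) = \frac1n\sum_{i=1}^n \psi(x_i,y_i;\theta)$ with $\psi(x,y;\theta) = -\frac12\log\sigma(G_\theta(x)) - \frac12\log\sigma(-G_\theta(y))$, where $G_\theta := \log(p_\theta/q)$ and $\sigma$ is the logistic function. Since the model is a simple exponential family, $G_\theta$ is affine in the parameters (after absorbing the log-normalizer, as in the $\tilde\theta=(\theta,c)$ trick), so $\lnce^n$ is convex; hence $\thetan$ is eventually well defined and, by a uniform law of large numbers together with \Cref{lem:nce_opt}, consistent. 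Standard $M$-estimation then yields $\sqrt n(\thetan - \theta_*) \to \mathcal N(0, H^{-1}\Sigma H^{-1})$ in distribution, where $H = \nabla^2 L(\theta_*)$ is exactly the Hessian bounded in \Cref{thm:nce_hessian_bound} and $\Sigma = \mathrm{Cov}\big(\nabla_\theta\psi(x,y;\theta_*)\big)$. The step I expect to be the main obstacle --- and where the ``technical machinery'' is needed --- is upgrading this distributional limit to convergence of second moments, $\lim_{n\to\infty} n\,\ex{\norm{\thetan - \theta_*}_2^2} = \mathrm{tr}(H^{-1}\Sigma H^{-1})$: this needs a uniform-integrability argument for $\{\,n\norm{\thetan-\theta_*}_2^2\,\}_n$, which I would obtain by exploiting the convexity and the curvature of $\lnce^n$ outside a neighborhood of $\theta_*$ to control $\Pr[\norm{\thetan - \theta_*}_2 \geq t]$ uniformly in $n$. (If $H$ is singular the asymptotic variance is $+\infty$ and the statement is immediate, so assume $H \succ 0$.)

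Granting the above, it remains to prove $\mathrm{tr}(H^{-1}\Sigma H^{-1}) \geq \exp(\Omega(d))$. The key identity is an exact rewriting of $H$ and $\Sigma$: differentiating $\psi$ twice and converting the data and noise expectations into expectations under the mixture $P_m := \frac12 P_* + \frac12 Q$ via $\frac{p_*}{p_*+q} = \sigma(G_*)$, $\frac{q}{p_*+q} = \sigma(-G_*)$ (with $G_* := G_{\theta_*}$, $\nabla_\theta G_* := \nabla_\theta G_\theta|_{\theta=\theta_*}$, and $w := \sigma(G_*)\sigma(-G_*) \in [0,\tfrac14]$), one checks that all $\nabla^2_\theta G_*$ terms cancel between the data and noise parts, leaving
\[ H = \E_{P_m}[\,w\,\nabla_\theta G_*\,\nabla_\theta G_*^\top\,], \qquad \mu := \E_{P_m}[\,w\,\nabla_\theta G_*\,], \qquad \Sigma = \tfrac12 H - 2\mu\mu^\top . \]
Since $H \succeq 0$ and $\Sigma \succeq 0$, I would use $\mathrm{tr}(H^{-1}\Sigma H^{-1}) = \mathrm{tr}(\Sigma H^{-2}) \geq \lambda_{\min}(H^{-2})\,\mathrm{tr}(\Sigma) = \mathrm{tr}(\Sigma)/\norm{H}_2^2$ together with $\mathrm{tr}(\Sigma) = \tfrac12\mathrm{tr}(H) - 2\norm{\mu}_2^2$. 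Thus it suffices to show $\norm{\mu}_2^2 \leq \tfrac18\mathrm{tr}(H)$ for $d$ large: then $\mathrm{tr}(\Sigma) \geq \tfrac14\mathrm{tr}(H) \geq \tfrac14\norm{H}_2$, so $\mathrm{tr}(H^{-1}\Sigma H^{-1}) \geq \tfrac{1}{4\norm{H}_2} \geq \exp(\Omega(d))$ by \Cref{thm:nce_hessian_bound}, which is the claim.

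For $\norm{\mu}_2^2 \leq \tfrac18\mathrm{tr}(H)$, apply Cauchy--Schwarz with weight $w$: for any unit vector $u$, $(u^\top\mu)^2 = \big(\E_{P_m}[\,w\cdot u^\top\nabla_\theta G_*\,]\big)^2 \leq \E_{P_m}[w]\cdot\E_{P_m}[\,w\,(u^\top\nabla_\theta G_*)^2\,] = M\cdot u^\top H u \leq M\norm{H}_2 \leq M\,\mathrm{tr}(H)$, where $M := \E_{P_m}[w] \leq \tfrac14$. So $\norm{\mu}_2^2 \leq M\,\mathrm{tr}(H)$, and it is enough to show $M \to 0$ as $d \to \infty$. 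This is where the structure of $P_*$ finally enters: since $P_*$ is a product distribution with log-concave, non-Gaussian marginal $p^*$, we have $G_*(x) = \sum_{i=1}^d \log\!\big(p^*(x_i)/\varphi(x_i)\big)$ with $\varphi$ the standard Gaussian density, and by the law of large numbers $\tfrac1d G_*(x) \to \mathrm{KL}(p^*\,\|\,\varphi) > 0$ when $x\sim P_*$ and $\tfrac1d G_*(x) \to -\mathrm{KL}(\varphi\,\|\,p^*) < 0$ when $x\sim Q$; in either case $|G_*(x)| \to \infty$ $P_m$-almost surely, so $w \to 0$ pointwise, and since $w \leq \tfrac14$, dominated convergence gives $M \to 0$. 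Hence $M \leq \tfrac18$ for all large $d$. Notice the decoupling: the dimension dependence enters \emph{only} through the bound $\norm{H}_2 \le \exp(-\Omega(d))$ of \Cref{thm:nce_hessian_bound}, whereas $M$ just needs a constant bound, which holds for any non-Gaussian log-concave $p^*$ --- so no delicate tracking of constants is required beyond what \Cref{thm:nce_hessian_bound} already provides.
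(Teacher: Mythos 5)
Your proof is correct, and it takes a genuinely different --- and substantially cleaner --- route than the paper's. Both arguments start from the asymptotic Gaussian limit and reduce the problem to lower-bounding $\mathrm{tr}(H^{-1}\Sigma H^{-1})$ (in your notation) by roughly $1/\norm{H}_2$, after which \Cref{thm:nce_hessian_bound} supplies the exponential factor. The difference is in how $\Sigma$ is related to $H$. The paper (\Cref{subsec:4_2}--\ref{subsec:4_3}) fixes a single direction $v = \mathbf{1}$, proves anti-concentration of the log-density ratios $\log R_1, \log R_2$ via the Berry--Esseen theorem (\Cref{l:r_anti_conc}), establishes a tail bound on $\norm{T(x)}$ (\Cref{s:tup_tail_bound}), and runs a careful three-event case analysis to obtain the directional bound $v^\top\Sigma v \ge \frac{1}{34}\,v^\top H v$. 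You instead notice an exact algebraic identity: because the model is an exponential family (so $\nabla_\theta G_\theta = T$ and $\nabla_\theta^2 G_\theta = 0$) and because the optimum satisfies $p_{\theta_*} = p_*$, the data-side and noise-side gradient means coincide ($\E_{P_*}[a] = \E_Q[b] = \nu$), which gives
$\Sigma = \tfrac12 H - 2\mu\mu^\top$ with $\mu = \tfrac12\nu = \E_{P_m}[w\,T]$; I re-derived this and it is correct. One application of weighted Cauchy--Schwarz then gives $(u^\top\mu)^2 \le M\,u^\top H u$ with $M = \E_{P_m}[w] = \tfrac12\int \tfrac{p_* q}{p_* + q} \le \tfrac12(1 - \TV(P_*,Q)) \le \tfrac12\delta^d$ --- small directly from \Cref{l:tv_separation}, no LLN/dominated-convergence argument needed --- which yields the \emph{matrix} inequality $\Sigma \succeq (\tfrac12 - 2M)H \succeq \tfrac14 H$. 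This is both tighter and more uniform than the paper's one-direction bound, and it bypasses the Berry--Esseen machinery, the $\Tup$ tail bound, and the event case analysis entirely. The one part of your write-up that is still hand-wavy --- upgrading distributional convergence of $\sqrt n(\thetan-\theta_*)$ to convergence of second moments --- is a genuine technical point, but you at least flag it explicitly, whereas the paper treats \cref{eq:mse} as exact without comment.

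Two small notational points worth tightening if you formalize this: (i) you bound $\norm{\mu}_2^2 \le M\norm{H}_2 \le M\,\mathrm{tr}(H)$; the second inequality is what you use, but the first already gives $\Sigma \succeq (\tfrac12 - 2M)H$ directly, which is cleaner than going through $\mathrm{tr}(\Sigma) \ge \tfrac14\mathrm{tr}(H)$; (ii) your $\Sigma$ denotes $\mathrm{Cov}(\nabla_\theta\psi)$ while the paper's $\Sigma$ in \cref{eq:sigma} denotes the sandwiched asymptotic covariance $H^{-1}\mathrm{Cov}(\nabla_\theta\psi)H^{-1}$; the two are consistent but the clash of symbols could confuse a reader comparing the two proofs.
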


\section{Exponentially flat Hessian: Proof of Theorem \ref{thm:nce_hessian_bound}}
\label{s:bounding_nce_hessian}
The proof of \Cref{thm:nce_hessian_bound} consists of three ingredients. First, in \Cref{s:hessianbound}, we will compute an algebraically convenient upper bound for the spectral norm of the Hessian of the loss (\cref{eq:nce_loss}). We will restrict our attention to the case when $\set{P_\theta}$ belongs to an exponential family. The upper bound will be in terms of the total variation distance $\TV(P_*,Q)$ and the Fisher information matrix of the sufficient statistics at $\theta_*$. Here, $P_*$ denotes the true data distribution and $Q$ denotes the noise distribution.


Then, in \Cref{s:constructing_hard_p}, we construct a
distribution $P^*$ for which the TV distance between $P^*$ and $Q$ is large. We do this by ``tensorizing'' a univariate distribution. Namely, we construct a univariate distribution with mean 0 and variance 1 that is at a constant TV distance from a standard univariate Gaussian. Then, we use the fact that the \emph{Hellinger} distance tensorizes, along with the relationship between TV and Hellinger distance, to show that $TV(P^*, Q) \geq 1 - \delta^d$ for some constant $\delta < 1$. (See \cite{larrynotes} for a detailed review of distance measures.) \Cref{s:bounding_fisher} bounds the Fisher information matrix term, completing all the components required to establish \Cref{thm:nce_hessian_bound}.


\subsection{Bounding the Hessian in terms of TV distance}
\label{s:hessianbound}
Suppose $\set{P_\theta}$ is an exponential family of distributions, that is $p_\theta(x) = \exp(\theta^\transpose T(x))$, where $T(x)$ is a known function. Then, a straightforward calculation (see e.g., Appendix A in \cite{liu2021analyzing}) shows that the gradient and the Hessian of the NCE loss (\cref{eq:nce_loss}) with respect to $\theta$ have the following forms:
\begin{align*}
	\nabla_\theta p_\theta(x) & = p_\theta(x) \cdot T(x), \labeleqn{eq:p_gradient}\\
	\nabla_\theta L(p_\theta) & = \frac{1}{2} \int_x \frac{q}{p_\theta+q} (p_\theta - p_*) T(x) dx, \labeleqn{eq:nce_loss_gradient}\\
	\nabla^2_\theta L(p_\theta) & = \frac{1}{2} \int_x \frac{(p_* + q)p_\theta q}{(p_\theta+q)^2} T(x) T(x)^\transpose dx. \labeleqn{eq:nce_loss_hessian}
\end{align*}

For $\theta = \theta_*$ and $p_\theta = p_*$, we have
\begin{align*}
	\nabla^2_\theta L(p_{\theta_*})
	&= \frac{1}{2} \int_x \frac{p_* q}{p_* + q} T(x) T(x)^\transpose dx 
	\preceq \frac{1}{2} \int_x \min (p_*, q) T(x) T(x)^\transpose dx 
    \labeleqn{eq:hessian_at_opt} 
\end{align*}
The second line holds since $\frac{p_* q}{p_* + q} = \min(p_* ,q) \cdot \frac{\max(p_*,q)}{p_* + q} \le \min(p_*,q)$.
Applying the \textcolor{blue}{matrix} version of the Cauchy-Schwarz inequality (\Cref{l:vector_cs}, \Cref{s:cs_vector}) to \cref{eq:hessian_at_opt} with two parts $\frac{\min(p_*(x), q(x))}{\sqrt{p_*(x)}}$ and $T(x) T(x)^\transpose \sqrt{p_*(x)}$, we obtain
\begin{align*}
	\|\nabla^2_\theta L(P_*)\|_2 \le \norm{\nabla^2_\theta L(P_*)}_F
	&\le \frac{1}{2} \brac{\int_x \frac{\min(p_*,q)^2}{p_*}}^{\frac{1}{2}} \brac{\int_x \norm{T(x)T(x)^\transpose}_F^2 p_*(x) dx}^{\frac{1}{2}}\\
	&\le \frac{1}{2} \brac{\int_x \min(p_*,q)dx}^{\frac{1}{2}} \brac{\int_x \norm{T(x)T(x)^\transpose}_F^2 p_*(x) dx}^{\frac{1}{2}}\\
	\implies \norm{\nabla^2_\theta L(P_*)}_2 
	&\le \frac{1}{2} \brac[\bigg]{1 - \TV(P_*,Q)}^{\frac{1}{2}} \brac{\int_x \norm{T(x)T(x)^\transpose}_F^2 p_*(x) dx}^{\frac{1}{2}}. \labeleqn{eq:hessian_bound}
\end{align*}

We bound the two terms in the product above separately. The first term is small when $P_*$ and $Q$ are significantly different. The second term is an upper bound of the Frobenius norm of the Fisher matrix at $P_*$. We will construct $P_*$ such that the first term dominates, giving us the upper bound required.

\subsection{Constructing the hard distribution \texorpdfstring{$P_*$}{}}
\label{s:constructing_hard_p}

The hard distribution $P_*$ over $\R^d$ will have the property that $\E_{P_*}[x] = 0$, $\E_{P_*}[xx^\top]=I_d$, but will still have large TV distance from the standard Gaussian $Q=\mathcal{N}(0, I_d)$. This distribution will simply be a product distribution---the following lemma formalizes our main trick of tensorization to construct a distribution having large TV distance with the Gaussian.
\begin{lemma}
	\label{l:tv_separation}
	Let $d > 0$ be given. Let $Q=\mathcal{N}(0, I_d)$ be the standard Gaussian in $\mathbb{R}^d$. Then, for some $\delta < 1$, there exists a log-concave distribution $P$ (also over $\mathbb{R}^d$) with mean 0 and covariance $I_d$ satisfying $\TV(P,Q) \ge 1-\delta^d$.
\end{lemma}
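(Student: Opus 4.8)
The plan is to prove the lemma by \emph{tensorization}: reduce the $d$-dimensional claim to constructing a single univariate distribution $p$ with the right moments and a constant Hellinger (equivalently, TV) distance from $\mathcal{N}(0,1)$, then take $P = p^{\otimes d}$. First I would establish the univariate building block: I want a log-concave density $p$ on $\R$ with $\E_p[x] = 0$, $\E_p[x^2] = 1$, and $\TV(p, \mathcal{N}(0,1)) \ge \epsilon_0$ for some absolute constant $\epsilon_0 > 0$ (so also $\Hel^2(p,\mathcal{N}(0,1)) \ge \epsilon_0'$ for a possibly smaller constant, using $\Hel^2 \ge \mathrm{something}\cdot\TV^2$ — more precisely $\TV \le \sqrt{2}\,\Hel$, so $\Hel^2 \ge \TV^2/2 \ge \epsilon_0^2/2$). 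A clean choice is a symmetric two-sided exponential (Laplace) distribution, rescaled to have variance $1$: its density is $p(x) = \frac{1}{\sqrt 2}\exp(-\sqrt 2 |x|)$, which is log-concave, symmetric (hence mean $0$), has variance $1$, and is manifestly not Gaussian — the TV distance is a fixed positive constant that one can either compute explicitly or lower-bound crudely (e.g.\ by comparing densities on a fixed interval, or noting the kurtosis mismatch forces a constant separation). Any other convenient log-concave, mean-$0$, variance-$1$, non-Gaussian density works equally well.

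Next I would tensorize. Set $P = p^{\otimes d}$ and $Q = \mathcal{N}(0,1)^{\otimes d} = \mathcal{N}(0,I_d)$. Then $P$ has mean $0$ and covariance $I_d$ coordinatewise, and $P$ is log-concave since a product of log-concave densities is log-concave. The key structural fact is that squared Hellinger affinity multiplies over products: writing the Hellinger affinity $\rho(\mu,\nu) = \int \sqrt{d\mu\, d\nu}$, we have $\rho(P,Q) = \rho(p,\mathcal{N}(0,1))^d$. Since $\Hel^2(p,\mathcal{N}(0,1)) = 1 - \rho(p,\mathcal{N}(0,1)) \ge \epsilon_0^2/2 > 0$, we get $\rho(p,\mathcal{N}(0,1)) \le 1 - \epsilon_0^2/2 =: \delta < 1$, hence $\rho(P,Q) \le \delta^d$. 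Finally, convert back to TV using $\TV(P,Q) \ge 1 - \rho(P,Q) \ge 1 - \delta^d$ (this follows from $\TV \ge \Hel^2 = 1 - \rho$, which is the standard inequality $\Hel^2 \le \TV$). This yields exactly the claimed bound.

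I expect the only mildly delicate point to be pinning down the univariate constant $\epsilon_0$: one must verify that the chosen $p$ genuinely sits at a positive TV distance from $\mathcal{N}(0,1)$, and ideally produce a clean closed-form or a transparent lower bound rather than an opaque numerical constant. For the Laplace choice this is routine — for instance, on the interval where the Laplace density exceeds the Gaussian density one gets a fixed mass gap — but it should be stated carefully so that $\delta < 1$ is genuinely absolute (independent of $d$). Everything else (log-concavity of products, moment computations for the product, and the tensorization identity for the Hellinger affinity together with the TV–Hellinger comparisons) is standard and can be cited, e.g.\ to \cite{larrynotes}. A secondary bookkeeping point is to make sure the variance-normalization of $p$ is carried through so that the covariance of $P$ is exactly $I_d$ and not merely proportional to it.
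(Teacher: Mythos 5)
Your proposal is correct and matches the paper's proof in essence: both tensorize the Bhattacharyya/Hellinger affinity $\rho$ of a univariate mean-zero, unit-variance, log-concave, non-Gaussian density and then convert back to total variation via the standard Hellinger--TV comparison $1-\rho \le \mathrm{TV}$. The only cosmetic differences are that you instantiate with the Laplace density (the paper's lemma proof leaves $\hat P$ abstract, and later sections use $\hat p(x)\propto \exp(-x^4/\sigma^4)$ because its sufficient statistic fits the exponential family used throughout), and you take a small detour TV $\to$ Hellinger $\to$ TV rather than bounding the Bhattacharyya coefficient directly.
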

\begin{proof}
    \newcommand{\crho}{c_\rho}
    Let $\hat{Q}$ denote the standard normal distribution over $\R$. Let $\hat{P}$ be any other distribution over $\R$ with mean $0$ and variance $1$ that satisfies
    $ \rho(\hat{P},\hat{Q}) = \delta < 1 $,
    where $\rho(\hat{P}, \hat{Q})=\int_x \sqrt{\hat{p}\hat{q}} \,dx$ is the Bhattacharya coefficient. Since $\rho$ tensorizes \citep{larrynotes}, we have that $\rho(\hat{P}^d, \hat{Q}^d) = \rho(\hat{P},\hat{Q})^d$ for any $d>1$. We can then write the Hellinger distance between $P,Q$ as 
	\begin{equation}		\label{eq:hellinger_tensorization}
		H^2(P,Q) := 1 -\int_x \sqrt{pq} \,dx=  2(1-\rho(\hat{P}, \hat{Q})^d).
	\end{equation}
	Further, we also know that
	\begin{align*}
		&\frac{1}{2}H^2(\hat{P}^d, \hat{Q}^d) \le \TV(\hat{P}^d, \hat{Q}^d)  
		\implies  1-\rho(\hat{P}, \hat{Q})^d \le \TV(\hat{P}^d, \hat{Q}^d)
		 \implies  1 - \delta^d \le \TV(\hat{P}^d, \hat{Q}^d). 
	\end{align*}
	Setting $P=\hat{P}^d$ and noting that $\hat{Q}^d = Q = \mathcal{N}(0, I_d)$, we have
	$ \TV(P, Q) \ge 1 - \delta^d. $
	Finally, if the chosen $\hat{P}$ is a log-concave distribution, then so is $\hat{P}^d$, since the product of log-concave distributions is log-concave, which completes the proof.
\end{proof}

We will now explicitly define the distribution $P_*$ that we will work with for rest of the paper.
\begin{definition}
\label{d:pstar}
Consider the exponential family $\set{p_\theta(x) = \exp\brac{\theta^\top T(x)}}_{\theta \in \R^{d+1}}$ given by the sufficient statistics $T(x) = (x_1^4, \ldots, x_d^4,1)$. Let $P_* = \hat{P}^d$ where $\hat{P}$ is the distribution on $\R$ with density function $\hat{p}$ given by
\begin{equation*}
    \hat{p}(x) \propto \exp\brac{-\frac{x^4}{\sigma^4}}.
\end{equation*}
We will set the constant of proportionality $C$ and $\sigma$ appropriately to ensure that $\hat{P}$ has mean 0 and variance 1.
Note that $P_* = P_{\theta_*}$ for $\theta_* = - \brac{\frac{1}{\sigma^4}, \ldots , \frac{1}{\sigma^4}, \log C}. $
\end{definition}

Since $\frac{d^2\log \hat{p}}{dx^2} = -\frac{12x^2}{\sigma^4} \le 0$, $\hat{p}$ is log-concave. Further, symmetry of $\hat{p}$ around the origin gives $\ex*{\hat{P}} = 0$, and the choice of $\sigma$ ensures that $\var*{\hat{P}} = 1$.
The normalizing constant $C$ satisfies 
	\begin{align*}
		C &= \int_{-\infty}^{\infty}e^{-\frac{x^4}{\sigma^4}}dx = 2\int_{0}^{\infty}e^{-\frac{x^4}{\sigma^4}}dx.
	\end{align*}
	Substituting $t=\frac{x^4}{\sigma^4}$, $dt = \frac{4x^3}{\sigma^4}dx = \frac{4t^{3/4}}{\sigma}dx$ gives
	\begin{align*}
		C &= \frac{\sigma}{2}\int_0^{\infty}t^{-3/4}e^{-t}dt = \frac{\sigma}{2}\Gamma\left(\frac{1}{4}\right) = 2\sigma\Gamma\left(\frac{5}{4}\right).
	\end{align*}
	where $\Gamma(z)$ is the gamma function defined as $\Gamma(z)\int_0^\infty x^{z-1} e^{-x} dx.$
	The variance is given by
	\begin{align*}
		\var{\hat{P}} &= \frac{1}{C}\int_{-\infty}^{\infty}x^2e^{-\frac{x^4}{\sigma^4}}dx = \frac{2}{C}\int_{0}^{\infty}x^2e^{-\frac{x^4}{\sigma^4}}dx.
	\end{align*}
	The same substitution as above gives
	\begin{align*}
		\text{Var}(\hat{P})
		&= \frac{1}{2C}\int_0^\infty t^{1/2}t^{-3/4}\sigma^3e^{-t}dt = \frac{\sigma^3}{2C}\int_0^\infty t^{-1/4}e^{-t}dt = \frac{\sigma^3}{2C}\Gamma\left(\frac{3}{4}\right)= \frac{\sigma^2}{4}\frac{\Gamma(3/4)}{\Gamma(5/4)}.
	\end{align*}
	Thus, setting $\sigma= \sqrt{\frac{4\Gamma(5/4)}{\Gamma(3/4)}}$ results in $\var*{\hat{P}}=1$. Correspondingly, we have $C = \frac{4\Gamma(5/4)^{3/2}}{\sqrt{\Gamma(3/4)}}$.

For this choice of $\hat{P}$, the Bhattacharya coefficient $\rho(\hat{P}, \hat{Q})$ is given by:
	\begin{align*}
		\rho(\hat{P}, \hat{Q})
		&= \int_{-\infty}^{\infty}\sqrt{\hat{p}(x)\hat{q}(x)}dx 
		= \frac{1}{\sqrt{C\sqrt{2\pi}}}\int_{-\infty}^{\infty}\exp\left(-\frac{x^2}{4}-\frac{x^4}{2\sigma^4} \right)dx 
		\approx 0.9905 \le 0.991 < 1. 
	\end{align*}
Thus, in the proof of \Cref{l:tv_separation}, we can use this choice of $\hat{P}$, and we have that for $\delta=0.991$ and $P_* = \hat{P}^d$, $\TV(P_*, Q)\ge 1-\delta^d$, as required.

\subsection{Bounding the Fisher information matrix}
\label{s:bounding_fisher}
In this subsection, we bound the second factor in \cref{eq:hessian_bound}, which is an upper bound on the Frobenius norm of the Fisher information matrix at $\theta_*$.

\newcommand{\mfisher}{M}

\begin{lemma}
    \label{l:nce_fisher_bound}
    For some constant $M > 0$, we have
    \begin{equation}
        \label{eq:nce_fisher_bound}
        \int_x \norm{T(x)T(x)^\top}_F^2 p_*(x) dx \le d^2 \mfisher,
    \end{equation}
\end{lemma}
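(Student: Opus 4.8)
The plan is to exploit the fact that $T(x)T(x)^\top$ is a rank-one matrix, so that its Frobenius norm equals $\norm{T(x)}_2^2$. Concretely, with $T(x) = (x_1^4,\dots,x_d^4,1)$ we have
\[
\norm{T(x)T(x)^\top}_F^2 = \norm{T(x)}_2^4 = \brac{1 + \sum_{i=1}^d x_i^8}^2 = 1 + 2\sum_{i=1}^d x_i^8 + \sum_{i=1}^d x_i^{16} + \sum_{i \ne j} x_i^8 x_j^8.
\]
Integrating against $p_*$, using linearity of expectation and the product structure $p_*(x) = \prod_{i=1}^d \hat p(x_i)$, reduces the bound to controlling the univariate moments of $\hat P$. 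Writing $m_k := \E_{\hat P}[x^k]$, the left-hand side of \cref{eq:nce_fisher_bound} equals $1 + 2d\, m_8 + d\, m_{16} + d(d-1)\, m_8^2$.

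Next I would check that $m_8$ and $m_{16}$ are finite absolute constants. Since $\hat p(x) \propto \exp(-x^4/\sigma^4)$, all moments of $\hat P$ are finite, and the substitution $t = x^4/\sigma^4$ already used in \Cref{d:pstar} to evaluate $C$ and the variance gives, for every even power $2\ell$,
\[
\E_{\hat P}[x^{2\ell}] = \frac{1}{C}\int_{-\infty}^{\infty} x^{2\ell} e^{-x^4/\sigma^4}\,dx = \frac{\sigma^{2\ell}}{\Gamma(1/4)}\,\Gamma\!\left(\frac{2\ell+1}{4}\right),
\]
where we used $C = \tfrac{\sigma}{2}\Gamma(1/4)$. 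Since $\sigma$ is an absolute constant, so are $m_8$ and $m_{16}$.

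Finally, since $1 \le d^2$, $2d \le 2d^2$, $d \le d^2$ and $d(d-1) \le d^2$ for all $d \ge 1$, the displayed quantity is at most $d^2\brac{1 + 2m_8 + m_{16} + m_8^2}$, so the lemma holds with $M = 1 + 2m_8 + m_{16} + m_8^2$. I do not expect a genuine obstacle here: the argument is purely a moment computation, and its only nontrivial input --- finiteness of the eighth and sixteenth moments of $\hat P$ --- is immediate from the quartic exponent in $\hat p$ and was essentially already carried out when the normalization and variance of $\hat P$ were fixed.
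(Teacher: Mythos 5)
Your proof is correct and follows essentially the same route as the paper: expand $\norm{T(x)T(x)^\top}_F^2 = \norm{T(x)}_2^4$, use the product structure of $p_*$ to reduce the integral to univariate moments of $\hat P$, and note these are bounded absolute constants. In fact your expansion is slightly more careful than the paper's, whose displayed formula has a minor typo writing $2\sum_i x_i^4$ for the cross term where it should read $2\sum_i x_i^8$ (as you have it); this makes no difference to the conclusion since both moments are finite constants, but your version is the correct one.
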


\begin{proof}

Recall that $T(x) = (x_1^4, \ldots, x_d^4,1)$. Then,
\begin{equation}
    \norm{T(x)T(x)^\transpose}_F^2 = \sum_{i} x_i^{16} + \sum_{i \neq j} x_i^8 x_j^8 + 2 \sum_{i} x_i^4 + 1.
\end{equation}

Therefore, by linearity of expectation, and using the fact that $P_*$ is a product distribution,
\begin{align*}
    \int_x \norm{T(x)T(x)^\top}_F^2 p_*(x) dx
    & = d \cdot \exwith{\hat{P}}{x^{16}} + d(d-1) \cdot \brac{\exwith{\hat{P}}{x^8}}^2 + 2d \cdot \exwith{\hat{P}}{x^4} + 1  \le d^2 \mfisher,
\end{align*}
for an appropriate choice of constant $\mfisher$. This constant exists since all the expectations above are bounded owing to the fact that the exponential density $\hat{p}$ dominates in the integrals.
\end{proof}

\subsection{Putting things together}
For $P_*$ defined as above, and $Q = \mathcal{N}(0,I_d)$, \Cref{l:tv_separation} ensures that
$1 - \TV(P_*, Q) \le \delta^d$,
for $\delta=0.991$. From \Cref{l:nce_fisher_bound}, we have that
\[ \int_x \norm{T(x)T(x)^T}_F^2 p_*(x) dx \le d^2 \mfisher. \]
Substituting these bounds in \cref{eq:hessian_bound}, we get that
\[ \norm{\nabla^2_\theta \lnce(P_*)}_2 \le \frac{1}{2}\delta^{d/2} d \sqrt{\mfisher} = \exp(-\Omega(d)).\] 
By construction, $p_*$ is a product distribution with $\exwith{p_*}{x} = 0$ and $\exwith{p_*}{xx^\top} = I_d$, which completes the proof of the theorem.

\newcommand{\tlower}{T_1}
\newcommand{\tupper}{T_2}
\newcommand{\clower}{c_1}
\newcommand{\cupper}{c_2}
\newcommand{\alower}{\alpha_1}
\newcommand{\aupper}{\alpha_2}
\newcommand{\cbe}{C_{\textup{BE}}}
\newcommand{\aR}{\alpha}
\newcommand{\muR}{\mu}
\newcommand{\cR}{c}
\newcommand{\Tup}{T_{\mathrm{up}}}

\section{Proof of Theorem \ref{thm:nce_sample_complexity}}
\label{s:nce_sample_complexity}
We will bound the error of the optimizer $\thetan$ of the empirical NCE loss (\cref{eq:empirical_nce}) using the bias-variance decomposition of MSE. To do this, we will reason about the random variable $\sqrt{n}(\thetan - \theta_*)$; let $\Sigma$ be its covariance matrix. Since $\thetan$ is an unbiased estimate of $\theta_*$, the MSE decomposes as
\begin{equation}
    \label{eq:mse}
    \ex{\norm{\thetan - \theta_*}_2^2} = \frac{1}{n} \Tr(\Sigma).
\end{equation}
The proof of \Cref{thm:nce_sample_complexity} proceeds as follows. In \Cref{subsec:4_1}, we show that the random variable $\sqrt{n}(\thetan - \theta_*)$ is asymptotically normal with mean $0$ and covariance matrix $\Sigma$ given by
\begin{equation}
    \label{eq:sigma}
    \Sigma = \nabla_\theta^2 \lnce(\theta_*)^{-1} \var{\sqrt{n} \nabla_\theta \lnce^n(\theta_*)} \nabla_\theta^2 \lnce(\theta_*)^{-1}.
\end{equation} 
We prove that the Hessian $\nabla^2_\theta L(\theta_*)$ is invertible in \Cref{s:hessian_invertibility}, so that the above expression is well-defined. Since $\Sigma \succeq 0$ (it is a covariance matrix), to get a lower bound on $\Tr(\Sigma)$, it suffices to get a lower bound on the largest eigenvalue of $\Sigma$. Looking at the factors on the right hand side of \cref{eq:sigma}, we note first that \Cref{thm:nce_hessian_bound} ensures an exponential lower bound on \textit{all} eigenvalues of $\nabla^2_\theta L(\theta_*)^{-1}$. The bulk of the proof towards lower bounding the largest eigenvalue of $\Sigma$ consists of lower bounding $\var{v^\top \cdot \sqrt{n} \nabla_\theta L^n(\theta_*)})$, the \textit{directional} variance of $\sqrt{n} \nabla_\theta \lnce^n(\theta_*)$ along a suitably chosen direction $v$ in terms of $v^\top \nabla_\theta^2 L(\theta_*)v$. In \Cref{subsec:4_2} and \Cref{subsec:4_3}, we use 
anti-concentration bounds to prove such variance lower bounds. 

\subsection{Gaussian limit of \texorpdfstring{$\sqrt{n}(\thetan - \theta_*)$}{normalized MSE}}
\label{subsec:4_1}

To begin, we will show that $\sqrt{n}(\thetan - \theta_*)$ behaves as a Gaussian random variable as $n \to \infty$. Recall that the empirical NCE loss is given by \cref{eq:empirical_nce}:
\begin{equation*}
    \lnce^n(\theta) = \frac{1}{n} \sum_{i=1}^n -\frac{1}{2} \ln \frac{p_\theta(x_i)}{p_\theta(x_i) + q(x_i)} + \frac{1}{n} \sum_{i=1}^n -\frac{1}{2} \ln \frac{q(y_i)}{p_\theta(y_i) + q(y_i)},
\end{equation*}
where $x_i \sim P_*$ and $y_i \sim Q$ are i.i.d. Let $\hat{\theta}_n$ be the optimizer for $\lnce^n$. Then, by the Taylor expansion of $\nabla_\theta \lnce^n$ around $\theta_*$, we have
\begin{equation}
    \label{eq:thetan_expression}
    \sqrt{n} \brac{\thetan - \theta_*} = - \nabla^2_\theta \lnce^n(\theta_*)^{-1} \cdot \sqrt{n} \nabla_\theta \lnce^n(\theta_*) - \sqrt{n} \cdot O\left(\norm{\thetan - \theta_*}^2\right)
\end{equation}
by \cite{gutmann2012noise}, who also show in their Theorem 2 that 
$\thetan$ is a consistent estimator of $\theta_*$; hence, as $n \to \infty$, $\norm{\thetan - \theta_*}^2\to 0$.
\citet[Lemma 12]{gutmann2012noise} also assert\footnote{Translating notation: $T_d = n, J_{T_d}(\theta)=-2L^{n}(\theta)$ and setting $\nu=1$ gives $\mathcal{I_\nu}=2\nabla^2L(\theta_*)$ as in \cref{eq:hessian_at_opt}.} that the Hessian of the empirical NCE loss (\cref{eq:empirical_nce}) at $\theta_*$ converges in probability to the Hessian of the true NCE loss (\cref{def:nce}) at $\theta_*$, i.e., $\nabla^2_\theta \lnce^n(\theta_*)^{-1} \xrightarrow{P} \nabla_\theta^2 \lnce(\theta_*)^{-1}$.
On the other hand, by the Central Limit Theorem, $\sqrt{n} \nabla_\theta \lnce^n(\theta_*)$ converges to a Gaussian with mean $\ex{\sqrt{n} \nabla_\theta \lnce^n(\theta_*)} = \sqrt{n} \nabla_\theta L(\theta^*) = 0$, and covariance $\var{\sqrt{n} \nabla_\theta \lnce^n(\theta_*)}$. With these considerations, we conclude that the random variable $\sqrt{n}(\thetan - \theta_*)$ in \cref{eq:thetan_expression} is asymptotically a Gaussian with mean $0$ and covariance $\Sigma=\nabla_\theta^2 \lnce(\theta_*)^{-1} \var{\sqrt{n} \nabla_\theta \lnce^n(\theta_*)} \nabla_\theta^2 \lnce(\theta_*)^{-1}$, as defined in \cref{eq:sigma}.

Next, we introduce some quantities which will be useful in the subsequent calculations. As we already have a handle on the spectrum of $\nabla_\theta^2 \lnce(\theta_*)$ from \Cref{thm:nce_hessian_bound}, the main object of our focus in \cref{eq:sigma} is the term $\var{\sqrt{n} \nabla_\theta \lnce^n(\theta_*)}$. In particular, since we are concerned with the directional variance of $\Sigma$, we will reason about $\var{v^\transpose \cdot \sqrt{n} \nabla_\theta L^n(\theta_*)}$ for a fixed vector of ones, i.e., $v=1^{d+1}$. This vector has the property that for all $x, v^\top T(x) \ge 1$, as all non-constant coordinates of $T$ are non-negative, and the remaining coordinate is $1$. Note that
\begin{equation*}
    \nabla_\theta L^n(\theta_*)
    = - \frac{1}{2 n} \sum_{i=1}^n \frac{q(x_i) T(x_i)}{p_*(x_i) + q(x_i)} + \frac{1}{2n} \sum_{i=1}^n \frac{p_*(y_i) T(y_i)}{p_*(y_i) + q(y_i)} 
\end{equation*}
where $x_i \sim P_*$ and $y_i \sim Q$.
Writing out the variance term explicitly, we have
\begin{align*}
    \var{v^\transpose \cdot \sqrt{n} \nabla_\theta \lnce^n(\theta_*)}
    & = n \cdot \frac{1}{4n} \varwith{x \sim p_*}{\frac{q(x) \cdot v^\transpose T(x)}{p_*(x) + q(x)}} + n \cdot \frac{1}{4n} \varwith{y \sim q}{\frac{p_*(y) \cdot v^\transpose T(y)}{p_*(y) + q(y)}} \tag*{(using linearity and independence)}\\
    & = \frac 14 \text{Var}_{x \sim p_*}\underbrace{\left[{\frac{q(x) \cdot v^\transpose T(x)}{p_*(x) + q(x)}}\right]}_{A(x)} + \frac 14 \text{Var}_{y \sim q}\underbrace{\left[\frac{p_*(y) \cdot v^\transpose T(y)}{p_*(y) + q(y)}\right]}_{B(y)}. \labeleqn{eq:variance_expression}
\end{align*}

Define $A(x) = \frac{q(x) \cdot v^\transpose T(x)}{p_*(x) + q(x)} = \frac{R_1(x)}{1 + R_1(x)} v^\transpose T(x)$ where $R_1(x) = \frac{q(x)}{p_*(x)}$ and $B(y) = \frac{p_*(y) \cdot v^\transpose T(y)}{p_*(y) + q(y)} = \frac{R_2(y)}{1 + R_2(y)} v^\transpose T(y)$ where $R_2(y) = \frac{p_*(y)}{q(y)}$. To show that $\varwith{x \sim p_*}{A(x)}$ and $\varwith{y \sim q}{B(y)}$ are large, we will need anti-concentration bounds on $R_1(x)$ and $R_2(y)$.

\subsection{Anti-concentration of \texorpdfstring{$R_1(x), R_2(y)$}{R1(x), R2(y)}}
\label{subsec:4_2}

Next, we show that $R_1$ and $R_2$ satisfy (quantitative) anti-concentration. We show this by a relatively straightforward application of the Berry-Esseen Theorem, and the proof is given in \Cref{s:r_anti_conc_proof}. Precisely, we show: 
\begin{restatable}{lemma}{anticonclemma}
    \label{l:r_anti_conc}
    Let $d > 0$ be sufficiently large. Let $p = \hat{p}^d$ and $q = \hat{q}^d$ be any product distributions, and define $R(x) = \frac{q(x)}{p(x)}$. Suppose we have the following third moment bound:  $\exwith{x \sim \hat{p}}{\brac{\log \frac{\hat{q}}{\hat{p}}}^3} < \infty.$ Then, for any $\epsilon$, there exist constants $\aR = \aR(\hat{p},\hat{q}, \epsilon)$, $\muR = \muR(\hat{p},\hat{q}, \epsilon) < 0$ such that
    \begin{equation*}
			\prwith{x \sim p}{R(x) \le \exp\brac{\muR d - \aR \sqrt{d}}} \ge \frac{1}{2} - \epsilon  \text{ and } \prwith{x \sim p}{R(x) \ge \exp\brac{\muR d + \aR \sqrt{d}}} \ge \frac{1}{2} - \epsilon.
    \end{equation*}
\end{restatable}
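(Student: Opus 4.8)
The plan is to reduce the statement about $R(x) = q(x)/p(x)$ to a statement about a sum of i.i.d. random variables and then invoke the Berry--Esseen theorem. Write $\log R(x) = \sum_{i=1}^d \log \frac{\hat q(x_i)}{\hat p(x_i)}$; since $x \sim p = \hat p^d$, each summand $Z_i := \log \frac{\hat q(x_i)}{\hat p(x_i)}$ is i.i.d. with the $x_i \sim \hat p$. Let $\mu_0 := \E_{x\sim\hat p}[Z_1]$ and $\sigma_0^2 := \operatorname{Var}_{x\sim\hat p}(Z_1)$. Note $\mu_0 = -\KL(\hat p \,\|\, \hat q) < 0$, which is strictly negative since $\hat p \neq \hat q$ (they differ because $\rho(\hat P, \hat Q) < 1$), and $\sigma_0^2 > 0$ likewise because $Z_1$ is not a.s.\ constant. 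The finiteness of the third moment $\E_{x\sim\hat p}[Z_1^3]$ (equivalently $\E|Z_1|^3 < \infty$, using finiteness of the lower moments) is exactly the hypothesis of the lemma, and I should remark that it holds for the concrete $\hat p, \hat q$ in Definition~\ref{d:pstar} since $\log(\hat q/\hat p)$ is a polynomial in $x$ of degree $4$ and $\hat p$ has Gaussian-beating tails $\propto e^{-x^4/\sigma^4}$.

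Next I would apply Berry--Esseen to the normalized sum $S_d := \frac{1}{\sigma_0\sqrt d}\sum_{i=1}^d (Z_i - \mu_0)$: there is an absolute constant $\cbe$ such that $\sup_{t} \big| \Pr[S_d \le t] - \Phi(t) \big| \le \frac{\cbe \, \E|Z_1 - \mu_0|^3}{\sigma_0^3 \sqrt d}$, which for $d$ large enough is at most $\epsilon$. Now choose the threshold constant $\aR := \aR(\hat p, \hat q, \epsilon)$ so that $\Phi(-\aR/\sigma_0) \le \tfrac12$ holds with room to spare --- concretely any $\aR > 0$ works since $\Phi(0) = \tfrac12$ and $\Phi$ is increasing, but to be safe against the $\epsilon$ slack I would pick $\aR$ with $\Phi(-\aR/\sigma_0) \geq$ something, or more simply just take $\aR$ large enough that both tail events below have probability close to $\tfrac12$; actually the cleanest is: set $\muR := \mu_0 < 0$ and $\aR := \sigma_0 \cdot z$ for a fixed $z > 0$ (e.g.\ $z=1$), giving $\Pr[\log R(x) \le \mu_0 d - \aR \sqrt d] = \Pr[S_d \le -z]$, and by Berry--Esseen this is $\ge \Phi(-z) - \epsilon$. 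To land on the exact constant $\tfrac12 - \epsilon$ stated, I would instead send $z \to 0^+$: for $z$ sufficiently small, $\Phi(-z) \ge \tfrac12 - \epsilon/2$, and then absorb the Berry--Esseen error into the other $\epsilon/2$, so with $\aR := \sigma_0 z$ and $d$ large, $\Pr[\log R(x) \le \muR d - \aR\sqrt d] \ge \tfrac12 - \epsilon$. The symmetric upper-tail bound $\Pr[\log R(x) \ge \muR d + \aR \sqrt d] = \Pr[S_d \ge z] \ge 1 - \Phi(z) - \epsilon \ge \tfrac12 - \epsilon$ follows identically from the two-sided Berry--Esseen bound. Exponentiating both inequalities yields the claimed statement about $R(x)$.

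A minor subtlety worth handling carefully: the constants $\aR, \muR$ are allowed to depend on $\hat p, \hat q, \epsilon$ but must be independent of $d$, which the construction above respects ($\mu_0, \sigma_0$ are properties of the univariate pair). Another small point is ensuring $\sigma_0 > 0$: if $Z_1$ were a.s.\ constant then $\hat q/\hat p$ would be constant, forcing $\hat p = \hat q$ (both being densities), contradicting $\rho(\hat P,\hat Q)=\delta<1$; so $\sigma_0 \in (0,\infty)$ as needed.

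I do not anticipate a serious obstacle here --- the lemma is essentially a packaging of Berry--Esseen. The only place requiring a little care is matching the precise constant $\tfrac12 - \epsilon$ in the conclusion, which I handle by choosing the threshold $z$ small (so $\Phi(\mp z) \to \tfrac12$) and taking $d$ large enough that the Berry--Esseen remainder is a fraction of $\epsilon$; one should double-check that this choice of small $z$, and hence small $\aR$, is still compatible with how the lemma is invoked downstream in \Cref{subsec:4_3} (it is, since later steps only need $\aR$ to be some fixed positive constant and $\muR d$ to dominate).
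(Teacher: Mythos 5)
Your argument is essentially the paper's proof verbatim: both reduce $\log R(x)$ to an i.i.d.\ sum of $\log(\hat q/\hat p)$ terms, apply Berry--Esseen with the normalizing constants $\mu_0 = -\mathrm{KL}(\hat p\,\|\,\hat q)$ and $\sigma_0$, choose a small threshold $z$ (the paper calls it $c$) so that $\Phi(\mp z)$ is within $\epsilon/2$ of $1/2$, take $d$ large so the Berry--Esseen remainder is below $\epsilon/2$, and set $\muR=\mu_0$, $\aR=\sigma_0 z$. No gaps; the same approach, with the same observation that $\muR<0$ and the same remark that the constants depend only on $(\hat p,\hat q,\epsilon)$ and not $d$.
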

Instantiating \Cref{l:r_anti_conc} for the pair $(p_*, q)$ gives us the anti-concentration result for $R_1$, while instantiating it for the reversed pair $(q, p_*)$ gives us the anti-concentration result for $R_2$. We can verify that the third moment condition holds in both instantiations, since in both the cases, $\log(\hat{q}/\hat{p})$ is a polynomial. Crucially, we will also utilize the fact that the constant $\mu$ is negative (as it equals $-\textup{KL}(\hat{p}|| \hat{q})$).
We are now ready to bound the variance of $A(x)$ and $B(y)$.

\subsection{Bounding the variance of \texorpdfstring{$A(x), B(y)$}{A(x), B(y)}}
\label{subsec:4_3}

Recall that $A(x)=\tfrac{R_1(x) \cdot v^\transpose T(x)}{1+R_1(x)}$ and $B(y)=\tfrac{R_2(y) \cdot v^\transpose T(y)}{1+R_2(y)}$. 
Let $\muR, \aR$ be the constants given by \Cref{l:r_anti_conc} for $p_*,q, \epsilon$. Further, let $L_1 = \exp\brac{\muR d - \aR \sqrt{d}}$ and $L_2 = \exp\brac{\muR d + \aR \sqrt{d}}$. Since the mapping $x \mapsto \frac{x}{1+x}$ is monotonically increasing in $x$,
\begin{equation}
\label{eq:r1_1}
\prwith{x \sim p_*}{R_1(x) \le L_1} = \prwith{x \sim p_*}{\frac{R_1(x)}{1 + R_1(x)} \le \frac{L_1}{1 + L_1}} \ge \frac{1}{2} - \epsilon
\end{equation}
\begin{equation}
\label{eq:r1_lb}
\prwith{x \sim p_*}{R_1(x) \ge L_2} = \prwith{x \sim p_*}{\frac{R_1(x)}{1 + R_1(x)} \ge \frac{L_2}{1 + L_2}}  \ge \frac{1}{2} - \epsilon.
\end{equation}
Let $\Tup$ be such that 
\begin{equation}
	\label{eq:r1_2}
	\prwith[\Big]{x \sim p_*}{\norm{T(x)} \le \Tup} \ge \frac 78
	\qquad \text{and} \qquad
	\prwith[\Big]{x \sim q}{\norm{T(x)} \le \Tup} \ge \frac 78.
\end{equation}
In \Cref{s:tup_tail_bound}, we show that some $\Tup = O(\sigma^2 \sqrt{d})$ suffices for this to hold. Then, from \cref{eq:r1_1}, we have 
\begin{align*}
    &\prwith{x \sim p_*}{\frac{R_1(x)}{1 + R_1(x)} \le \frac{L_1}{1 + L_1}} \ge \frac{1}{2} - \epsilon \\
    \implies\quad & \prwith{x \sim p_*}{\frac{R_1(x)\cdot v^\transpose T(x)}{1 + R_1(x)} \le \frac{L_1 \sqrt{d+1}\norm{T(x)}}{1 + L_1}} \ge \frac{1}{2} - \epsilon \tag*{(Cauchy-Schwarz)} \\
    \implies\quad & \prwith{x \sim p_*}{\brac{\frac{R_1(x) \cdot v^\transpose T(x)}{1+R_1(x)} \le \frac{L_1 \sqrt{d+1} \norm{T(x)}}{1 + Li_1}} \land \brac[\bigg]{\norm{T(x)} \le \Tup}}  \ge \frac{3}{8} - \epsilon \tag*{(union bound with \cref{eq:r1_2})}\\
    \implies\quad & \prwith{x \sim p_*}{\frac{R_1(x) v^\transpose T(x)}{1+R_1(x)} \le \frac{\sqrt{d+1} L_1 \Tup}{1 + L_1}} \ge \frac 38 - \epsilon \\
	\implies \quad &\prwith{x \sim p_*}{A(x)\le \frac{\sqrt{d+1} L_1 \Tup}{1 + L_1}} \ge \frac 14,
\end{align*}
for $\epsilon \le \tfrac 18$. On the other hand, recall also that $v$ satisfies $v^\transpose T(x) \ge 1$ for all $x$. Therefore, we have 
\begin{align*}
	&\prwith{x \sim p_*}{\frac{R_1(x)}{1 + R_1(x)} \ge \frac{L_2}{1 + L_2}}  \ge \frac{1}{2} - \epsilon  \\
	\implies\quad &\prwith{x \sim p_*}{\frac{R_1(x) \cdot v^\transpose T(x)}{1 + R_1(x)} \ge \frac{L_2}{1 + L_2}}  \ge \frac{1}{2} - \epsilon
	\implies \quad \prwith{x \sim p_*}{A(x)\ge \frac{L_2}{1 + L_2}}  \ge \frac{1}{4}.
\end{align*}
Now, consider the event  
$A_1 = \set{A(x) \in \sbrac{\frac{1}{2} \exwith{x \sim p_*}{A(x)}, \frac{3}{2} \exwith{x \sim p_*}{A(x)} }}.$
If this event were to intersect both the events $A_2 = \set{A(x) \le \tfrac{\sqrt{d+1} L_1 \Tup}{1 + L_1}}$ and $A_3 = \set{A(x) \ge \tfrac{L_2}{1+L_2}}$, then we would have
\begin{align*}
&\frac{1}{2} \exwith{x \sim p_*}{A(x)} \le \frac{\sqrt{d+1}L_1\Tup}{1+L_1} \quad \text{ and } \quad  \frac{3}{2} \exwith{x \sim p_*}{A(x)} \ge \frac{L_2}{1+L_2} \\
\implies \quad & \frac{L_2}{L_1} \cdot \frac{1}{\Tup\sqrt{d+1}}\cdot \frac{L_1 + 1}{L_2 + 1} \le 3.
\end{align*}
We will show that this cannot be the case. Recall that $\muR < 0$, 
which means that $L_2 = \exp(\muR d+ \alpha\sqrt{d}) < 1$ for sufficiently large $d$. This means that for sufficiently large $d$ we have:
\begin{align*}
 &\exp(\muR d+ \alpha\sqrt{d}) < 1 \\
 \implies \quad & \exp(\muR d+ \alpha\sqrt{d}) - 2\exp(\muR d - \alpha\sqrt{d}) < 1 \\
 \implies \quad & 1 + \exp(\muR d+ \alpha\sqrt{d}) < 2 + 2\exp(\muR d - \alpha\sqrt{d}) \\
 \implies \quad & \frac{1+\exp(\muR d- \alpha\sqrt{d})}{1+\exp(\muR d+ \alpha\sqrt{d})} > \frac{1}{2} \\
 \implies \quad & \frac{L_1 + 1}{L_2 + 1} > \frac 12. 
\end{align*}
Further, since $\frac{L_2}{L_1}=\exp(2\alpha\sqrt{d})$ and $\Tup = O(\sigma^2 \sqrt{d})$, we get that
\begin{align*}
    \frac{L_2}{L_1} \cdot \frac{1}{\Tup\sqrt{d+1}}\cdot \frac{L_1 + 1}{L_2 + 1} > \frac{\exp(2\alpha \sqrt{d})}{O(\sigma^2 d)} \cdot \frac{1}{2} > 3,
\end{align*}
where the last inequality follows for large enough $d$ since the numerator grows faster than the denominator. Hence for large enough $d$, $A_1$ cannot intersect both $A_2$ and $A_3$. If the event $A_1$ is disjoint from $A_2$, then 
\begin{align*}
&\prwith{x \sim p_*}{A_1 \cup A_2} = \prwith{x \sim p_*}{A_1} + \prwith{x \sim p_*}{A_2} \le 1 \\
\implies \quad &\prwith{x \sim p_*}{A_1} \le 1 - \prwith{x \sim p_*}{A_2} \\
\implies \quad &\prwith{x \sim p_*}{A(x) \in \sbrac{\frac{1}{2} \exwith{x \sim p_*}{A(x)}, \frac{3}{2} \exwith{x \sim p_*}{A(x)} }} \le \frac 34 \\
\implies \quad & \prwith{x \sim p_*}{ \abs{A - \E_{p_*} A} \ge \frac 12 \E_{p_*} A} \ge \frac 14.
\end{align*}
This finally lower-bounds the variance of $A$ as
\begin{align*}
    \varwith{p_*}{A} & = \ex{(A-\E_{p_*} A)^2} \ge \frac{1}{4} (\E_{p_*} A)^2 \cdot \pr{(A - \E_{p_*} A)^2 \ge \frac{1}{4} (\E_{p_*} A)^2}  \ge \frac{1}{16} (\E_{p_*} A)^2.
\end{align*}
and thus $\E_{p_*}(A^2)-(\E_{p_*} A)^2 = \varwith{p_*}{A} \ge \frac{1}{16} (\E_{p_*}A)^2$, so that $(\E_{p_*} A)^2 \le \frac{16}{17} \E_{p_*} (A^2)$. 

Altogether, we get $\varwith{p_*}{A} \ge \frac{1}{17} \E_{p_*} (A^2)$. An analogous argument in the case when $A_1$ is disjoint with $A_3$ yields the same bound on the variance. 
Using an identical argument for $R_2$ and $B$, we get that for large enough $d$, $\varwith{q}{B} \ge \frac{1}{17} \E_{q} (B^2)$. 

\subsection{Putting things together} 
Putting together the lower bounds  $\varwith{p_*}{A} \ge \frac{1}{17} \E_{p_*} (A^2)$ and $\varwith{q}{B} \ge \frac{1}{17} \E_{q} (B^2)$ we showed in the previous subsection, and recalling \cref{eq:variance_expression}, we get
\begin{align*}
    \var{v^\transpose \cdot \sqrt{n} \nabla_\theta \lnce^n(\theta_*)}
    & = \frac{1}{4} \varwith{p_*}{A} + \frac{1}{4} \varwith{p_*}{B} \ge \frac{1}{68} \brac{\exwith{p_*}{A^2} + \exwith{q}{B^2}} \\
    & = \frac{1}{68} \brac{\int_x \brac{\frac{q(x)^2p_*(x) + q(x)p_*(x)^2}{(p_*(x) + q(x))^2}} v^\top T(x)T(x)^\top v \,dx} \\
    & = \frac{1}{68} v^\top\cdot \int_x \frac{p_*(x)q(x)}{p_*(x) + q(x)} T(x) T(x)^\top dx \cdot v= \frac{1}{34} v^\top \nabla^2_\theta L(\theta_*) v \tag*{(from \cref{eq:hessian_at_opt}).}
\end{align*}
Finally, since $\nabla^2_\theta L(\theta_*)$ is invertible as claimed earlier (\Cref{l:hessian_invertibility}, \Cref{s:hessian_invertibility}), let $w$ be such that $v = \nabla^2_\theta L(\theta_*)^{-1} w$. Then, recalling the expression for $\Sigma$ in \cref{eq:sigma}, we can conclude that
\begin{align*}
    w^\top \Sigma w
    = v^\top \var{\sqrt{n} \nabla_\theta L^n(\textcolor{blue}{\theta_*})} v &= \var{v^\transpose \cdot \sqrt{n} \nabla_\theta \lnce^n(\theta_*)} \\
    &\ge \frac{1}{34} v^\top \nabla^2_\theta L(\theta_*) v
    =  \frac{1}{34} w^\top \nabla^2_\theta L(\theta_*)^{-1} w \labeleqn{eq:sigma_lb_2},
\end{align*}
which gives us the desired bound on the MSE, namely
\begin{align*}
    \ex{\norm{\thetan - \theta_*}_2^2}
    &\ge \frac{1}{n} \Tr(\Sigma) \ge \frac{1}{n} \sup_{z} \frac{z^\top \Sigma z}{\norm{z}^2} \\
    &\ge \frac{1}{n} \frac{w^\top \Sigma w}{\norm{w}^2} \ge \frac{1}{34n} \frac{w^\top \nabla^2_\theta L(\theta_*)^{-1} w}{\norm{w}^2}\ge \frac{1}{34n} \inf_z \frac{z^\top \nabla^2_\theta L(\theta_*)^{-1} z}{\norm{z}^2} \ge \frac{\exp\brac{\Omega(d)}}{n},
\end{align*}
where the last inequality follows from \Cref{thm:nce_hessian_bound} and the fact that $\lambda_{\max}(\nabla^2_\theta L(\theta_*))^{-1}=\lambda_{\min}(\nabla^2_\theta L(\theta_*)^{-1})$. 
This concludes the proof of \Cref{thm:nce_sample_complexity}.

	\newpage
\section{Simulations}
\label{s:experiments}

\begin{wrapfigure}{r}{0.3\textwidth}
\vspace{-0.5cm}
  \begin{center}
    \includegraphics[width=0.28\textwidth]{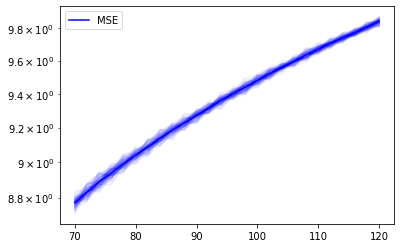}
  \end{center}
  \caption{Log MSE versus Dimension---\Cref{thm:nce_sample_complexity} suggests this plot should be linear, as is observed.}
  \vspace{-0.5cm}
  \label{fig:real-moments}
\end{wrapfigure}


We also verify our results with simulations. Precisely, we study the MSE for the empirical NCE loss as a function of the ambient dimension, and recover the dependence from Theorem~\ref{thm:nce_sample_complexity}. For dimension $d \in \{70,72,\dots,120\}$, we generate $n=500$ samples from the distribution $P_*$ we construct in the theorem. We generate an equal number of samples from the noise distribution $Q=\mathcal{N}(0, I_d)$, and run gradient descent to minimize the empirical NCE loss to obtain $\hat{\theta}_n$. Since we explicitly know what $\theta_*$ is, we can compute the squared error $\|\hat{\theta}_n-\theta_*\|^2$. We run 100 trials of this, where we obtain fresh samples each time from $P_*$ and $Q$, and average the squared errors over the trials to obtain an estimate of the MSE.

\Cref{fig:real-moments} shows the plot of $\log \text{MSE}$ versus dimension - we can see that the graph is nearly linear. This corroborates the bound in \Cref{thm:nce_sample_complexity}, which tells us that as $n \to \infty$, the MSE scales exponentially with $d$. This behavior is robust even when the proportion of noise samples to true data samples is changed to 70:30 (though our theory only addresses the 50:50 case). 
Finally, we note that optimizing the empirical NCE loss becomes numerically unstable with increasing $d$ (due to very large ratios in the loss), which is why we used comparatively moderate values of $d$. 
	\section{Conclusion}
\label{sec:conclusion}
Despite significant interest in alternatives to maximum likelihood---for example NCE (considered in this paper), score matching, etc.---there is little understanding of what there is to ``sacrifice'' with these losses, either algorithmically or statistically. In this paper, we provided formal lower bounds on the asymptotic sample complexity of NCE, when using a common choice for the noise distribution $Q$, a Gaussian with matching mean and covariance. Thus, it is likely that even for moderately complex distributions in practice, more involved techniques like \citet{gao2020flow, rhodes2020telescoping} will have to be used, in which one learns a noise distribution $Q$ simultaneously with the NCE minimization or ``anneals'' the NCE objective. There is very little theoretical understanding of such techniques, and this seems like a very fruitful direction for future work.

	\nocite{*}
	\bibliographystyle{plainnat}
	\bibliography{arxiv_v1/main.bib}
    
    \newpage
	\appendix
	\section{Bounding the matrix integral in Equation \ref{eq:hessian_at_opt}}
\label{s:cs_vector}
We prove a variant of the Cauchy-Schwarz inequality that gives us a handle on norms of matrix integrals.
\begin{lemma}
	\label{l:vector_cs}
	Let $f: \R^d \to \R$ and $A: \R^d \to \R^n$ be integrable functions, with $M = \int_x f(x) A(x) dx$. Then we have
	\begin{equation}
		\label{eq:vector_cs}
		\norm{M}_2^2 = \norm{\int_x f(x)A(x)ds}_2^2 \le \brac{\int_x \abs{f(x)}^2 dx} \brac{\int_x \norm{A(x)}_2^2 dx}.
	\end{equation}
	Similarly, if $A:\R^d \to \R^{n \times n}$ is a matrix valued function then
	\begin{equation}
		\label{eq:frob_norm_cs}
		\norm{M}_F^2 = \norm{\int_x f(x)A(x)}_F^2 \le \brac{\int_x \abs{f(x)}^2 dx} \brac{\int_x \norm{A(x)}_F^2 dx}.
	\end{equation}
\end{lemma}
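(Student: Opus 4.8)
The plan is to prove the vector statement \cref{eq:vector_cs} first and then deduce the matrix statement \cref{eq:frob_norm_cs} from it by vectorization. For the vector case I would argue coordinatewise. Write $M = (M_1, \dots, M_n)$ and $A(x) = (A_1(x), \dots, A_n(x))$, so that $M_j = \int_x f(x) A_j(x)\,dx$ for each $j$. Applying the ordinary scalar Cauchy--Schwarz inequality to the functions $f$ and $A_j$ gives $M_j^2 \le \brac{\int_x \abs{f(x)}^2\,dx}\brac{\int_x A_j(x)^2\,dx}$. Summing over $j \in \{1,\dots,n\}$ and using $\sum_j A_j(x)^2 = \norm{A(x)}_2^2$ together with linearity of the integral yields $\norm{M}_2^2 = \sum_j M_j^2 \le \brac{\int_x \abs{f(x)}^2\,dx}\brac{\int_x \norm{A(x)}_2^2\,dx}$, which is exactly \cref{eq:vector_cs}. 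An alternative is the variational route: $\norm{M}_2 = \sup_{\norm{u}_2 = 1} u^\transpose M = \sup_{\norm{u}_2=1}\int_x f(x)\,u^\transpose A(x)\,dx$, and then one applies scalar Cauchy--Schwarz in $x$ and bounds $\abs{u^\transpose A(x)} \le \norm{u}_2\norm{A(x)}_2 = \norm{A(x)}_2$ before taking the supremum; both approaches give the same bound.

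For the matrix case, the key observation is that the Frobenius norm is just the Euclidean norm of the vectorization: $\norm{M}_F = \norm{\mathrm{vec}(M)}_2$ and $\norm{A(x)}_F = \norm{\mathrm{vec}(A(x))}_2$. Since vectorization is linear, it commutes with integration, so $\mathrm{vec}(M) = \int_x f(x)\,\mathrm{vec}(A(x))\,dx$, and \cref{eq:frob_norm_cs} follows immediately by applying \cref{eq:vector_cs} to the $\R^{n^2}$-valued function $\mathrm{vec}(A)$ in place of $A$.

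The only points that need care are minor: one should make sure all integrals involved are finite so that linearity of the integral and the scalar Cauchy--Schwarz step are legitimate — this is ensured by the integrability hypotheses on $f$ and $A$, or one can simply note that the inequality is trivial whenever the right-hand side is infinite — and, if one takes the variational route, one uses the elementary bound $\abs{u^\transpose A(x)} \le \norm{u}_2\norm{A(x)}_2$. There is no genuine obstacle here; the lemma is essentially a repackaging of the scalar Cauchy--Schwarz inequality, and the coordinatewise argument is the shortest way to see it.
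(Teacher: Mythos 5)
Your coordinatewise argument (scalar Cauchy--Schwarz on each $M_j$, sum over $j$, then vectorize for the Frobenius case) is exactly the paper's proof; the proposal is correct and takes the same route. The variational alternative you mention is a fine extra but not needed.
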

\begin{proof}
	The proof follows from the Cauchy-Schwarz inequality. Since we integrate component-wise, for \cref{eq:vector_cs} we have that
	\[ M_i^2 = \brac{\int_x f(x)A(x)_i dx}^2 \le \brac{\int_x f(x)^2dx} \brac{\int_x A(x)_i^2 dx}. \]
	Summing over $i$, we get the result. The matrix variant \cref{eq:frob_norm_cs} follows by looking at the matrix $M$ as a vector in $\R^{n^2}$.
\end{proof}

\section{Proof of Lemma \ref{l:r_anti_conc}}
\label{s:r_anti_conc_proof}
We restate the lemma for convenience:
\anticonclemma*

\begin{proof}
We will analyze the behaviour of $R(x)$ using the Berry-Esseen theorem. Given that $p_* = \hat{p}^d$ and $q = \hat{q}^d$ are product distributions, let $r(x)$ be the random variable defined by $r(x) = \frac{\hat{q}(x)}{\hat{p}(x)}$, $x \sim \hat{p}$.
Let $y_i(x) = \log r(x)$ for $1 \le i \le d$ be $d$ independent copies of the random variable $r(x)$.
Let $\ex{y_i} = \mu_r$, $\ex{\norm{y_i - \mu_r}^2} = \sigma_r^2$ and $\ex{\norm{y_i - \mu_r}^3} = \gamma_r$, all of which are well defined by the hypothesis of the lemma.
Let $Y = \sum_{i=1}^d y_i$, and $Z$ be the standard Gaussian in $\R$. Then, by the Berry-Esseen Theorem \cite[Theorem 3.4.17]{durrett2019probability}, 
\begin{equation*}
	\pr{ \frac{Y - \mu_r d}{\sigma_r \sqrt{d}} \le -\cR} \ge \pr{Z \le -\cR} - \frac{\cbe \cdot \gamma_r}{\sigma_r^3 \sqrt{d}},
\end{equation*}
where $\cbe < 1$ \citep{van1972application} is an absolute constant.
We can now choose $\cR=\cR(\epsilon)$ such that $\pr{Z \le \cR} \ge \tfrac{1 - \epsilon}{2}$. Further, we can choose $d$ large enough so that $\tfrac{\cbe \cdot \gamma}{\sigma^3 \sqrt{d}} \le \tfrac{\epsilon}{2}$. Then for $\muR = \mu_r$ and $\aR = \cR \sigma_r$, we have
\begin{equation*}
	\prwith{x \sim p}{R(x) \le \exp\brac{\muR d - \aR \sqrt{d}}} \ge \frac{1}{2} - \epsilon.
\end{equation*}
Since $Z$ is symmetric around 0, Berry-Esseen gives us the other inequality for the same choice of $\muR$ and $\aR$,
\begin{equation*}
	\pr{ \frac{Y - \mu_r d}{\sigma_r \sqrt{d}} \ge \cR} \ge \pr{Z \ge \cR} - \frac{\cbe \cdot \gamma_r}{\sigma_r^3 \sqrt{d}} \ge \frac{1}{2} - \epsilon.
\end{equation*}
Note that the constants $\muR$ and $\aR$ are independent of $d$. Further, note that $\mu = \mu_r = -\textup{KL}(\hat{p}|| \hat{q}) < 0$.
\end{proof}

\section{Invertibility of the Hessian}
\label{s:hessian_invertibility}

We prove that the Hessian of NCE loss for the exponential family given by $T(x) = (x_1^4, \ldots, x_d^4, 1)$ is invertible. In particular, we have the following lemma:
\begin{lemma}
	\label{l:hessian_invertibility}
	Let $Q = \mathcal{N}(0,I_d)$ be the standard Gaussian in $\R^d$. Let $\hat{P}$ be the log concave distribution defined in \cref{d:pstar}. Let $P = \hat{P}^d$. Let $q$ and $p$ denote the density functions of $Q$ and $P$ respectively. Observe that $P$ is in the exponential family given by $T(x) = (x_1^4, \ldots, x_d^4, 1)$, and equals $P_{\theta_*}$ for some $\theta_*$. Then the hessian of the NCE loss with respect to distribution $P$ and noise $Q$ given by
	\[ H = \nabla_\theta^2 L(\theta_*) = \frac{1}{2} \int_x \frac{p_*q}{p_* + q} T(x)T(x)^\top \]
	is invertible.
\end{lemma}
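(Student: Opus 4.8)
The plan is to show that $H$ is \emph{positive definite}, which for a symmetric matrix is equivalent to invertibility. Write $H = \frac12 \int_x w(x)\, T(x) T(x)^\top\, dx$ with weight $w(x) = \frac{p_*(x) q(x)}{p_*(x)+q(x)}$. Since $p_*$ and $q$ are everywhere-positive densities on $\R^d$, we have $w(x) > 0$ for every $x$, so $H$ is a ``continuous sum'' of positive semidefinite rank-one matrices with strictly positive coefficients; hence $H \succeq 0$ and the task reduces to ruling out a nonzero vector in its kernel.

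First I would fix an arbitrary $v = (v_1, \dots, v_d, v_{d+1}) \in \R^{d+1}$ and compute the quadratic form
\[
v^\top H v \;=\; \frac12 \int_x w(x)\, \big(v^\top T(x)\big)^2 \, dx \;\ge\; 0 .
\]
The integral is finite: from \cref{eq:hessian_at_opt} we have $w \le \min(p_*,q) \le p_*$, and $\int_x \norm{T(x)T(x)^\top}_F^2\, p_*(x)\, dx < \infty$ by \Cref{l:nce_fisher_bound}, which also controls $\int_x (v^\top T(x))^2 p_*(x)\, dx$ via Cauchy--Schwarz, so $H$ is a well-defined finite matrix. Because the integrand $w(x)\,(v^\top T(x))^2$ is nonnegative and $w$ is strictly positive everywhere, $v^\top H v = 0$ forces $v^\top T(x) = 0$ for Lebesgue-almost every $x \in \R^d$.

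Next I would observe that $v^\top T(x) = \sum_{i=1}^d v_i x_i^4 + v_{d+1}$ is a polynomial in $x_1,\dots,x_d$. A nonzero polynomial on $\R^d$ cannot vanish on a set of positive Lebesgue measure, so all its coefficients vanish; in particular $v_1 = \dots = v_d = v_{d+1} = 0$, i.e.\ $v = 0$. Equivalently, the functions $1, x_1^4, \dots, x_d^4$ are linearly independent in $L^2(w\,dx)$, so their weighted Gram matrix $H$ is positive definite. Therefore $v^\top H v > 0$ for every $v \neq 0$, and $H$ is invertible.

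There is essentially no hard step here; the only points requiring (minor) care are verifying that the defining integral converges so that $H$ is a well-defined finite matrix---handled by the domination $w \le p_*$ together with \Cref{l:nce_fisher_bound}---and invoking the elementary fact that a nonzero polynomial cannot vanish on a positive-measure subset of $\R^d$. Everything else is the standard argument that the Gram matrix of linearly independent functions against an almost-everywhere-positive weight is positive definite.
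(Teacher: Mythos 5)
Your proof is correct and follows the same overall strategy as the paper: both show that $H$ is positive definite by arguing that for nonzero $v$, the quadratic form $v^\top H v = \frac12\int_x w(x)(v^\top T(x))^2\,dx$ must be strictly positive because $w(x) = \frac{p_*(x)q(x)}{p_*(x)+q(x)} > 0$ everywhere. The difference is in how the final step is carried out. The paper gives a direct, constructive argument: it notes that $T(e_1), \dots, T(e_d), T(0)$ span $\R^{d+1}$, so for any nonzero $v$ at least one of these points satisfies $|T(x)^\top v| > 0$; then by continuity it finds an open neighborhood $A$ of that point on which $|T(\cdot)^\top v|$ is bounded away from zero, and restricts the integral to $A$ to conclude $v^\top H v \ge v^\top H_A v > 0$. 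You instead argue by contraposition, observing that $v^\top H v = 0$ would force $v^\top T(x) = \sum_i v_i x_i^4 + v_{d+1}$ to vanish Lebesgue-a.e., and then invoking the standard fact that a nonzero polynomial cannot vanish on a set of positive measure. Both arguments are elementary and sound; yours is a bit more generic (it applies to any family of sufficient statistics that are linearly independent polynomials and does not rely on a hand-picked basis of evaluation points), while the paper's is more explicit and avoids appealing to the polynomial-zero-set fact. Your additional remark about finiteness of $H$ via the domination $w \le \min(p_*,q) \le p_*$ and \Cref{l:nce_fisher_bound} is a nice touch that the paper leaves implicit.
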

\begin{proof}
	For any subset $A \subseteq \R^d$, define 
	\[ H_A = \frac{1}{2} \int_{x \in A} \frac{p_* q}{p_* + q} T(x)T(x)^\top. \]
	Observe that the density functions $p_*$ and $q$ of $P_*$ and $Q$ respectively are strictly positive over all of $\R^d$. Therefore, for any subset $A \subseteq \R^d$ and any $v \in \R^{d+1}$, we have
	\[ v^\top H v  \ge \frac{1}{2} \int_{x \in A} \frac{p_* q}{p_* + q} v^\top  T(x) T(x)^\top v = v^\top H_A v. \]
	Given a vector $v \in \R^{d+1}$, we will pick $A$ such that $\abs{T(x)^\top v} > 0$ for all $x \in A$. Note that the set $\mathcal{B} = \set{e_1 + e_{d+1}, \ldots, e_d + e_{d+1}, e_{d+1}}$ is a basis. Therefore, if $b^\top v = 0$ for all $b \in \mathcal{B}$, then $v = 0$. Hence, there exists some $x \in \set{e_1, \ldots, e_d}$ such that $\abs{T(x)^\top v} > 0$.
	Since $x \mapsto T(x)^\top v$ is a continuous function, we can find an open set $A$ around $x$ such that
	\[ \abs{T(y)^\top v} > 0, \qquad \forall y \in A.  \]
	It follows that
	\[ v^\top H_A v = \frac{1}{2} \int_{x \in A} \frac{p_* q}{p_* + q} v^\top T(x)T(x)^\top v = \frac{1}{2} \int_{x \in A} \frac{p_* q}{p_* + q} \abs{T(x)^\top v}^2 > 0. \]
	Let $B = \R^d \setminus A$. Since $v^\top H_A v > 0$ and $ v^\top H_B v \ge 0$, we have that $v^\top H v >0$. Since this holds for any arbitrary non-zero vector $v$, the matrix $H$ must be full rank. Since $H$ is an integral of PSD matrices, it is a full rank PSD matrix and hence invertible.
\end{proof}

\section{Tail bounds for Equation \ref{eq:r1_2}}
\label{s:tup_tail_bound}

We prove that some $T_{\mathrm{up}} = O(\sigma^2\sqrt{d})$ suffices to obtain the bounds in \cref{eq:r1_2}. Concretely, we prove tail bounds for $\norm{T(x)}$ using tail bounds for $P_*$ and $Q$.
We will use Lemma 1 from \cite{laurent2002adaptive} which proves a bound for $\chi^2$ distributions:
\begin{lemma*}[Lemma 1, \cite{laurent2002adaptive}]
	\label{t:chi_square_bound}
	If $X$ is a $\chi^2$ random variable with $d$ degrees of freedom, then for any positive $t$,
	\[ \pr{X - d \ge 2 \sqrt{td} + 2t} \le \exp\brac{-t}. \]
\end{lemma*}
Then, for $x \sim Q$, $\norm{x}^2$ is a $\chi^2$ random variable with $d$ degrees of freedom. Observe that for $t,d \ge 4$, we have $d + 2t + 2\sqrt{td} \le 2td$. In particular, we have the weaker bound
\[ \prwith[\Big]{x \sim Q}{\norm{x}^2 \ge 2dt^2} \le \exp\brac{-t^2}, \]
implying that 
\[ \prwith[\Big]{x \sim Q}{\norm{x} \ge t}	\le \exp\brac{-\frac{t^2}{2d}}. \]
Further, if $\norm{x} \ge \sigma^2 \sqrt{d}$, $q(x) \ge p_*(x)$, implying that for $t \ge \sigma^2 \sqrt{d}$
\[ \prwith[\Big]{x \sim P_*}{\norm{x} \ge t} \le \exp\brac{-\frac{t^2}{2d}}. \]
In particular, for any $\delta$ such that $\log(1/\delta) \ge \sigma^4$, we have
\begin{equation}
	\label{eq:gaussian_tail}
	\prwith[\Big]{x \sim Q}{\norm{x} \ge \sqrt{2d \log(1/\delta)}}	\le \delta
	\qquad \text{and} \qquad
	\prwith[\Big]{x \sim P_*}{\norm{x} \ge \sqrt{2d \log(1/\delta)}}	\le \delta.
\end{equation}

\end{document}